\def\eqref#1{equation~\ref{#1}}
\def\1{\bm{1}}
\DeclareMathAlphabet{\mathsfit}{\encodingdefault}{\sfdefault}{m}{sl}
\SetMathAlphabet{\mathsfit}{bold}{\encodingdefault}{\sfdefault}{bx}{n}
\def\sP{{\mathbb{P}}}
\newcommand*\samethanks[1][\value{footnote}]{\footnotemark[#1]}
\NewDocumentCommand\emojione{}{\scalerel*{\includegraphics{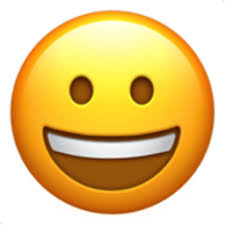}}{X}}
\title{RRM: Robust Reward Model Training Mitigates Reward Hacking}
\author{Tianqi Liu$^1$\thanks{Correspondence to Tianqi Liu, \texttt{tianqiliu@google.com}.}, Wei Xiong$^2$\thanks{Work done during an internship at Google DeepMind.}, Jie Ren$^1$, Lichang Chen$^3$\samethanks, Junru Wu$^1$, Rishabh Joshi$^1$, Yang Gao$^1$, \\
\textbf{Jiaming Shen$^1$, Zhen Qin$^1$, Tianhe Yu$^1$, Daniel Sohn$^1$, Anastasiia Makarova$^1$, Jeremiah Liu$^1$}, \\
\textbf{Yuan Liu$^1$, Bilal Piot$^1$, Abe Ittycheriah$^1$, Aviral Kumar$^1$, Mohammad Saleh$^1$}
\\\normalfont{Google DeepMind$^1$, University of Illinois Urbana-Champaign$^2$, University of Maryland, College Park$^3$}
}
\newtheorem{prop}{Proposition}[section]
\begin{document}

\maketitle

\begin{abstract}
Reward models (RMs) play a pivotal role in aligning large language models (LLMs) with human preferences. However, traditional RM training, which relies on response pairs tied to specific prompts, struggles to disentangle prompt-driven preferences from prompt-independent artifacts, such as response length and format. In this work, we expose a fundamental limitation of current RM training methods, where RMs fail to effectively distinguish between contextual signals and irrelevant artifacts when determining preferences. To address this, we introduce a causal framework that learns preferences independent of these artifacts and propose a novel data augmentation technique designed to eliminate them. Extensive experiments show that our approach successfully filters out undesirable artifacts, yielding a more robust reward model (RRM). Our RRM improves the performance of a pairwise reward model trained on Gemma-2-9b-it, on RewardBench, increasing accuracy from 80.61\% to 84.15\%. Additionally, we train two DPO policies using both the RM and RRM, demonstrating that the RRM significantly enhances DPO-aligned policies, improving MT-Bench scores from 7.27 to 8.31 and length-controlled win-rates in AlpacaEval-2 from 33.46\% to 52.49\%.
\end{abstract}

\section{Introduction}

Reinforcement Learning from Human Feedback (RLHF) has become a cornerstone in aligning large language models (LLMs) with human preferences to produce responses that are more helpful, honest, and harmless~\citep{ouyang2022training,bai2022training}. This approach involves training a reward model (RM) on human feedback, which then guides the LLM to generate high-quality responses through reinforcement learning. The success of RLHF is evident in various AI systems, such as Gemini~\citep{team2023gemini} and GPT-4~\citep{achiam2023gpt}. Despite its effectiveness, RLHF faces the fundamental issue of reward hacking~\citep{gao2023scaling}, where the model maximizes the reward function without truly aligning with the intended human preferences. This hacking issue occurs because the RM, while a powerful tool, is an imperfect proxy for human judgment and often struggles with out-of-distribution generalization~\citep{eisenstein2023helping}.

The reward hacking problem manifests in several ways, with verbosity being a common issue: LLMs tend to generate longer responses to appear more detailed or explanatory, exploiting human raters' bias towards lengthier content~\citep{shen2023loose,singhal2023long}. In recognition of this challenge, extensive efforts have been made in the literature. ODIN~\citep{chen2024odin} designs a two-head approach to learn the quality reward that is orthogonal to length. Similarly, length-controlled Alpaca~\citep{dubois2024length} estimates the controlled direct effect~\citep{vanderweele2011controlled} through logistic regression by adjusting the length. To mitigate the length bias, an improved version~\citep{park2024disentangling} of DPO~\citep{rafailov2024direct} introduces length as penalty to the reward score. In practice, there are more reward hacking patterns beyond length, such as format (markdowns, bold-faces) and patterns (certain $n$-grams or emojis). This is largely due to the large output space of language with limited preference data, as well as the diverse and subjective nature of human preferences.

It is challenging to identify and mitigate all potential exploitation patterns. We may consider the causal perspective to explain this phenomena. Given a prompt $x$ and a pair of responses $(y_1, y_2)$, the human preference can be caused by the real quality $s(x,y_1,y_2)$ that is associated with the prompt, or by the context-free artifacts $a(y_1, y_2)$ in the responses that do not depend on prompt. Traditional reward model training cannot differentiate the above two factors. There are two reasons for this. First, the pair of responses are always contextual and on-topic to the prompt, thus no counterfactual prompt (prompt from another examples) is used. The reward model may learn the artifacts existing in the responses by ignoring the prompt. If we use the counterfactual prompt, it can help estimate the level of artifact bias ($\mathbb{P}(y_1\succ y_2 | x')$ with $x'\neq x$) existing in the preference dataset~\citep{zhao2021calibrate}. Second, even if we adjust a few common artifacts, not all artifacts are observable and thus there is no easy way to control all the artifacts explicitly to answer the question ``what will the preference be if both responses share the same artifacts?''.  

In response to these challenges, we propose a simple and effective method to improve reward modeling. We first formulate the reward model training in a causal framework, then we augment the reward model training data based on the causal rules. By doing so, we can effectively adjust the artifacts and only learn the real quality. Our pipeline is illustrated in Figure~\ref{fig:pipeline}, where we augment the reward model training data by using responses from other examples to effectively balance the artifacts in chosen and rejected responses. To summarize, the contributions of this paper are three-fold:
\begin{itemize}
    \item We identify a key issue with traditional reward model training: it often fails to distinguish between genuine contextual preference signals and spurious context-free artifacts. 
    \item To address this, we propose a causal graph for human preference modeling and introduce data augmentation to mitigate artifacts learned by the reward model.
    \item We further demonstrate that policies trained on these robust reward models consistently outperform those based on baseline reward models.
\end{itemize}

\begin{figure}[t]
\centering
\includegraphics[width=0.95\textwidth]{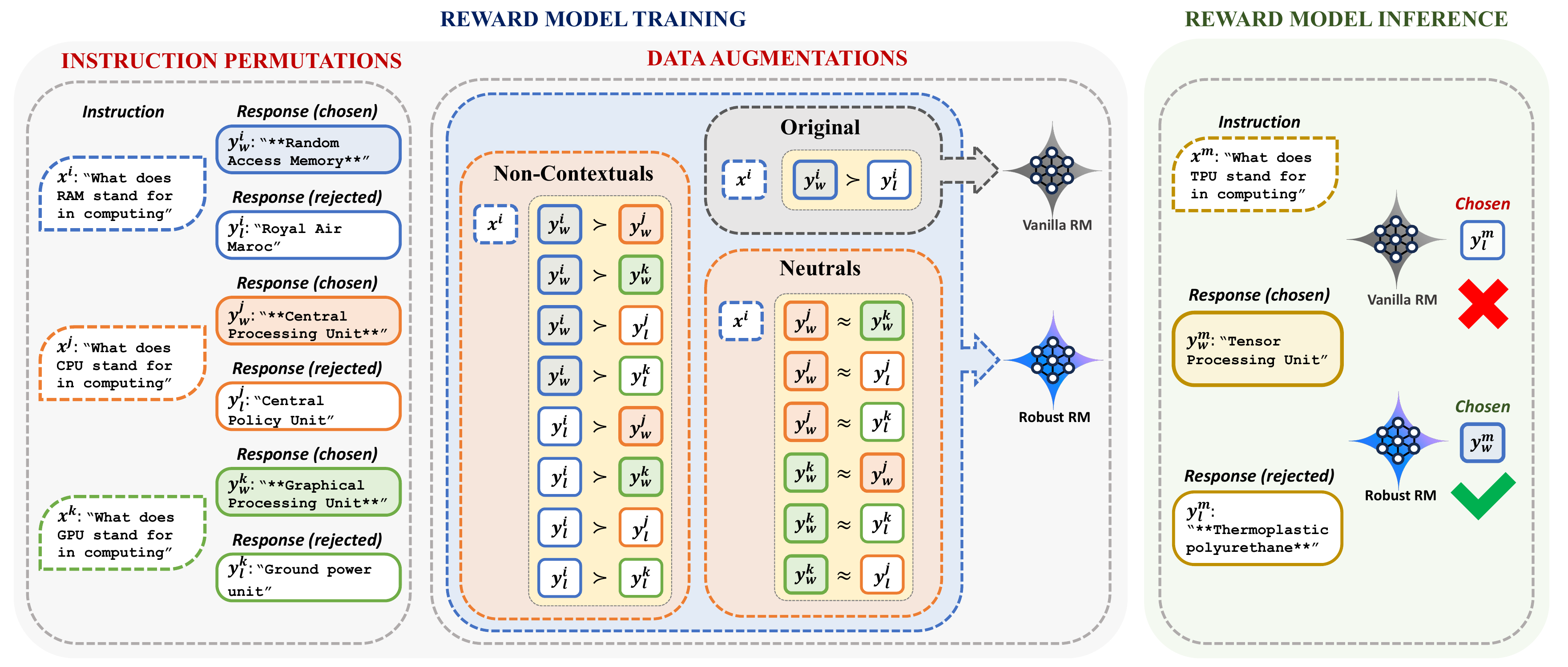}
\caption{The pipeline of our proposed robust reward model (RRM), which aims to decouple contextual preference quality signal and context-free artifacts. Suppose a proportion of chosen responses have certain artifact (bold-face wrapped with ``$**$'' in this figure), the reward model can hack the pattern and choose the response with the artifact instead of carefully reading the prompt. With our data augmentations, we can effectively balance the context-free artifacts in chosen and rejected responses, thus ensuring a more robust reward model during inference.}
\label{fig:pipeline}
\end{figure}

\section{Preliminaries}
In preference learning, we assume that there exists a preference oracle that determines the probability $\mathbb{P}(y_1 \succ y_2|x)$ that response $y_1$ is preferred over $y_2$ given the prompt $x$. Our goal is to optimize the preference by querying the preference oracle within certain budget constraint. In what follows, we first review the major ways to approximate and estimate the oracle based on a human preference dataset $\mathcal{D}_\text{hf}=\{x^{(i)}, y_w^{(i)}, y_l^{(i)}\}_{i=1}^N$. where $x^{(i)}$ represents prompt for example $i$, and $(y_w^{(i)}, y_l^{(i)})$ represents the chosen and rejected response, respectively.

\paragraph{Reward models} Bradley-Terry pointwise reward model~\citep{bradley1952rank, ouyang2022training} is a widely adopted method, which additionally assumes that there exists a reward function $r(x, y) \in \mathbb{R}$ and the preference oracle satisfies
$$
\mathbb{P}(y_1 \succ y_2 | x) = \frac{\exp(r(x,y_1))}{\exp(r(x,y_1)) + \exp(r(x,y_2))}= \sigma\big(r(x,y_1)-r(x,y_2)\big).
$$
Then, we can fit the Bradley-Terry model by maximizing the log-likelihood on the training set:
\begin{equation}\label{eq:dpo_loss}
    \mathcal{L}(r_\phi,\mathcal{D}_\text{hf}) = -\mathbb{E}_{(x,y_w,y_l)\sim\mathcal{D}_\text{hf}}\left[\log\sigma \left(r_\phi(x,y_w)-r_\phi(x,y_l)\right)\right].
\end{equation}

The second predominant approach is the pairwise ranking model \citep{zhao2023slic,jiang2023llm}, which takes a prompt and a pair of responses as the input, and directly predicts the probability $\mathbb{P}(y_1 \succ y_2|x)$, which subsumes the BT model as a subclass. In the literature, the pairwise preference model has shown to outperform pointwise BT reward both empirically \citep{zhao2023slic,jiang2023llm,dong2024rlhf} and theoretically \citep{ye2024theoretical} due to its flexibility and larger function class capacity. Specifically, we denote the pairwise ranking model and leverage the next token prediction ability of the language model to format the sample as:
\begin{center}
``[CONTEXT] \{$x$\} [RESPONSE A] \{$y_1$\} [RESPONSE B] \{$y_2$\}''    
\end{center}
 Then, the model outputs either ``A'' or ``B'' as preferred one. We use the probability of decoding ``A'' as estimation of the preference probability $\hat{\sP}(y_1\succ y_2|x)$\footnote{We randomly flip response pairs and associated labels to remove positional bias.}. In this work, we use the pairwise ranking model for its superior performance and flexibility.

\paragraph{Alignment Algorithms}
Start with a reward function $r(x,y)$, a reference policy $\pi_\text{ref}$, and input prompt distribution $\mathcal{P}$, a policy $\pi$ is trained to optimize for the following objective:
\begin{equation}\label{eq:rlhf_objective}
    \max_{\pi}\mathbb{E}_{x\sim\mathcal{P}}\left[\mathbb{E}_{y\sim\pi(\cdot|x)}r(x,y)-\beta \mathbb{D}_\text{KL}\left[\pi(\cdot|x)\|\pi_\text{ref}(\cdot|x)\right]\right],
\end{equation}
where $\beta>0$ is the KL penalty coefficient. Several algorithms have been proposed to solve the above optimization, including PPO~\citep{schulman2017proximal,ziegler2019fine}, SLiC~\citep{zhao2023slic}, DPO~\citep{rafailov2024direct}, RSO~\citep{liu2023statistical}, and IPO~\citep{azar2024general}. For a stable evaluation process, we use DPO in this work for preference alignment. For a given preference dataset $\mathcal{D}_\text{p}$\footnote{$\mathcal{D}_\text{p}$ can be $\mathcal{D}_\text{hf}$ or can be generated responses labeled by reward model as in \citet{liu2023statistical}}, DPO uses the following loss function:
\begin{equation}
    \mathcal{L}_\text{DPO}(\pi_\theta|\pi_\text{ref},\mathcal{D}_\text{p}) = -\mathbb{E}_{(x,y_w,y_l)\sim\mathcal{D}_\text{p}}\left[\log\sigma\left(\beta\log\frac{\pi_\theta(y_w|x)}{\pi_\text{ref}(y_w|x)} - \beta\log\frac{\pi_\theta(y_l|x)}{\pi_\text{ref}(y_l|x)}\right)\right]
\end{equation}

\paragraph{Reward Hacking}
Reward model is not perfect due to its limited model size, limited training data, and distribution shift between training data and alignment prompts and responses~\citep{eisenstein2023helping,gao2023scaling,guo2024direct,xiong2024iterative}. Several works have been proposed to mitigate reward hacking. One line of works focus on observable artifacts such as length~\citep{chen2024odin,dubois2024length,shen2023loose}. \citet{shen2023trickle} propose to enforce the consistency in reward model via data augmentation. To improve generalization, reward model ensembles can mitigate (but do not eliminate) reward hacking~\citep{coste2023reward,eisenstein2023helping,rame2024warm}. Reward hacking can also be mitigated during policy training with post-adjusted reward~\citep{park2024disentangling} or with post-training model merge~\citep{lin2023speciality}. We focus on improving the reward model by addressing reward hacking from a causal perspective.

\paragraph{Causal Inference}
Causal inference can be embedded in graphical model frameworks as a directed acyclic graph (DAG) $\mathcal{G}=(\mathcal{V},\mathcal{E})$ with variables represented as nodes in $\mathcal{V}$ and causal relationship represented as a directed edge~\citep{pearl2009causality,lee2020learning} in $\mathcal{E}$. We say a random vector $X$ to be \emph{faithful} with respect to a DAG $\mathcal{G}=(\mathcal{V},\mathcal{E})$ if for any $i,j\in \mathcal{V}$, and any subset $S\subseteq \mathcal{V}\backslash\{i,j\}$,
\begin{equation}
    X^i\perp X^j \mid X^S \Leftrightarrow \text{$i$ and $j$ are d-separated by $S$ under $\mathcal{G}$},
\end{equation}
where $X^i\perp X^j \mid X^S$ denotes $X^i$ and $X^j$ are independent conditional on $X^S$. The ``d'' in d-separation stands for dependence. Thus if $X^i$ and $X^j$ are d-separated relative to a set of variables $X^S$ in a directed graph, then they are independent conditional on $X^S$ in all probability distributions such a graph can represent. The definition of d-separation is as follows: suppose we are given a DAG $\mathcal{G}$; then, for two nodes $i,j\in\mathcal{V}$, a subset $S$ of $\mathcal{V}\backslash\{i,j\}$ d-connects $i$ and $j$ if there exists a path $L$ between $i$ and $j$ such that every collider in $L$ either belongs to $S$ or has a descendent in $S$, and no other node in $L$ belongs to $S$. If $S$ does not d-connect $i$ and $j$, then it d-separates $i$ and $j$. See Appendix~\ref{app:causal} for more details.

\section{Robust Reward Model Training}
 We first formulate the reward model training in a causal framework, then we augment the reward model training data based on the causal rules. 
\subsection{Causal framework}
\begin{figure}[h]
\centering
\includegraphics[width=.25\textwidth]{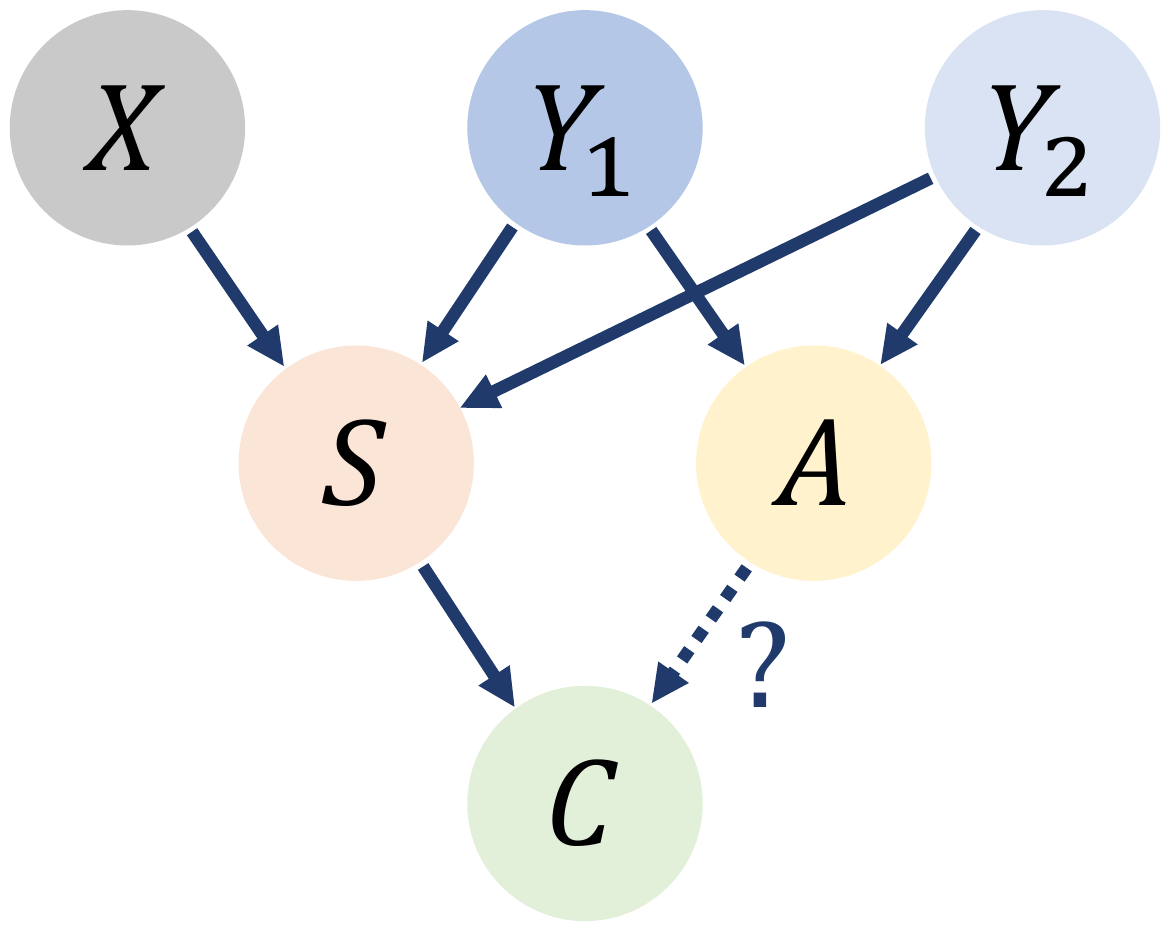}
\caption{Causal graph of reward model. $X$ is the prompt. $Y_1,Y_2$ are two responses. $S$ is the contextual signal that depends on input prompt and two responses. $A$ is the context-free artifact that only depends on two responses. $C$ is the preference label. Traditional reward model cannot differentiate the two DAGs on whether there is a causal edge from $A$ to $C$. Our work uses the augmented dataset to eliminate the edge from $A$ to $C$.
}
\label{fig:causal_graph}
\end{figure}
We formulate a DAG $\mathcal{G}$ to model the causal relationships among different quantities (Figure \ref{fig:causal_graph}). $X$ is the prompt, and $Y_1,Y_2$ are two responses. $S\in\mathbb{R}$ is the contextual signal that depends on input prompt and two responses. $A\in\mathbb{R}$ is the context-free artifact that only depends on two responses. $C\in\{0,1\}$ is the preference label, where $C=1$ means $Y_1$ is preferred over $Y_2$ and $C=0$ means the other way around. We assume the distribution of $(X,Y_1,Y_2,S,A,C)$ to be faithful to the DAG. We assume the preference label $C$ can be captured by $S$ and $A$, which is to say $\mathbb{P}(C|X,Y_1,Y_2)=\mathbb{P}(C|S,A)$. We assume the $S$ to be the \emph{sufficient statistic}~\citep{lehmann2006theory} that captures the contextual effect that one response fulfills the need of the prompt better than the other. We assume $A$ to the \emph{sufficient statistic} that captures the context-free artifacts that only depend on two responses. Such artifacts can include length, format (bold faces, bullet points, markdown, etc), and certain patterns ($n$-grams such as ``Sure, here is the response:''). In traditional reward model training, the model may hack the patterns in $(Y_1,Y_2)$. Suppose 80\% of winning responses to be longer, then the reward model can get 80\% accuracy by just counting the number of tokens. Formally, we construct two hypothesis:
\begin{itemize}
    \item $\mathcal{H}_0$: there is no causal edge from $A$ to $C$.
    \item $\mathcal{H}_1$: there is a causal edge from $A$ to $C$.
\end{itemize}

\begin{prop}\label{prop:identifiability}
In traditional reward model training, $\mathcal{H}_0$ and $\mathcal{H}_1$ are not always distinguishable.
\end{prop}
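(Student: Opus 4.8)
The plan is to prove non-identifiability by exhibiting two causal models, one realizing $\mathcal{H}_0$ and one realizing $\mathcal{H}_1$, that induce exactly the same law on the variables an analyst can actually observe in traditional training, namely $(X, Y_1, Y_2, C)$ with $S$ and $A$ latent and with each response pair always coupled to its own on-topic prompt. Since ``not always distinguishable'' only requires the existence of one indistinguishable pair, a single explicit construction suffices.

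First I would reduce the testing problem to a conditional-independence statement. By faithfulness and the d-separation rules recalled above, $\mathcal{H}_0$ (no edge $A\to C$) forces every path from $A$ to $C$ to pass through the chain node $S$, giving $A \perp C \mid S$; whereas under $\mathcal{H}_1$ the direct edge leaves $A$ and $C$ d-connected given $S$, so $A \not\perp C \mid S$. Hence any procedure separating the two hypotheses must, in effect, detect residual dependence of $C$ on $A$ after conditioning on $S$. I would then note that both $S = s(X,Y_1,Y_2)$ and $A = a(Y_1,Y_2)$ are deterministic functions of the (prompt, responses) triple, with the variables on which $A$ depends forming a subset of those on which $S$ depends, and that in traditional training the prompt is never a counterfactual one.

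The core of the argument is an absorption construction. Take any $\mathcal{H}_1$ model, e.g. $\mathbb{P}(C=1\mid S,A) = \sigma(\alpha S + \gamma A)$ with $\gamma \neq 0$, together with some joint law over $(X,Y_1,Y_2)$. Define the combined quantity $\tilde S := \alpha\, s(X,Y_1,Y_2) + \gamma\, a(Y_1,Y_2)$, which is again a function of $(X,Y_1,Y_2)$ and therefore an admissible contextual-signal node with the same structural parents as $S$. The companion model sets $\mathbb{P}(C=1\mid \tilde S) = \sigma(\tilde S)$ and deletes the $A\to C$ edge, so it lies in $\mathcal{H}_0$. Keeping the marginal over $(X,Y_1,Y_2)$ unchanged, the two models produce identical $\mathbb{P}(C\mid X,Y_1,Y_2)$ and hence identical joint laws on all observed variables; the only difference lives entirely in the unobserved split between $S$ and $A$. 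Consequently every likelihood, estimator, or independence test computable from traditional training data takes the same value on both models, so they cannot be told apart.

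The step I expect to be the main obstacle is verifying that the redefined model is a legitimate member of $\mathcal{H}_0$ and remains faithful to the corresponding DAG, i.e. that absorbing $A$ into $\tilde S$ neither secretly reintroduces a graph-forbidden dependence of $C$ on $A$ nor creates or destroys any other conditional independence. This hinges on the containment of $a(Y_1,Y_2)$'s arguments within $s(X,Y_1,Y_2)$'s arguments, so that no new functional pathway through $X$ is introduced. The conceptual crux I would emphasize is that this degeneracy is precisely caused by the absence of counterfactual prompts: a datum in which the same $(Y_1,Y_2)$ appeared under a different prompt $x' \neq x$ would hold $A$ fixed while moving $S$, breaking the observational equivalence and restoring identifiability.
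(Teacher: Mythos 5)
Your proposal is correct, and it reaches the conclusion by a route that differs from the paper's in a worthwhile way. The paper's proof fixes the functions $s$ and $a$ and a linear-logistic link $\sigma(\beta_s S+\beta_a A+\alpha+\epsilon_c)$, then retreats to the degenerate corner $\mathrm{Corr}(S,A)=1$, where $A=\beta_{as}S+\alpha_a$ lets one shift coefficient mass between $S$ and $A$ without changing $\mathbb{P}(C\mid X,Y_1,Y_2)$; indistinguishability is thus exhibited only under perfect collinearity of the two latent statistics. Your absorption construction instead keeps the data distribution and the functions $s,a$ arbitrary and exploits the one structural fact that actually drives the phenomenon: $a$'s arguments are contained in $s$'s arguments and both nodes are latent, so $\tilde S=\alpha s+\gamma a$ is an admissible signal node and any $\mathcal{H}_1$ model can be rewritten as an observationally identical $\mathcal{H}_0$ model. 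This is more general (no degeneracy needed) and isolates the real source of non-identifiability, at the cost of leaning harder on the question of what counts as a legitimate $S$; you rightly flag faithfulness of the companion model as the delicate step, and it is no worse there than in the paper's own construction (both use essentially deterministic latents). One caveat on your closing remark: for your particular pair of models, counterfactual prompts do \emph{not} break the equivalence, since $\tilde S$ remains a function of $(X,Y_1,Y_2)$ and the two predictors agree at every triple, including $(x',y_1,y_2)$ with $x'\neq x$; what counterfactual/augmented data buys (and what the paper's augmentation actually does) is to constrain the fitted predictor to obey the invariances intended for $\mathcal{H}_0$, not to identify the latent decomposition itself. That slip lies outside the proof proper and does not affect the validity of the indistinguishability argument.
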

\begin{proof}
As an example of the hypotheses being indistinguishable, let's consider a special case of $A$ and $S$ being perfectly correlated. More formally, assume $S=s(X,Y_1,Y_2)+\epsilon_s$ with certain non-linear function $s$ and $\epsilon_s\sim N(0,\sigma_s)$, and similarly $A=a(X,Y_1,Y_2)+\epsilon_a$ with non-linear function $a$ and $\epsilon_a\sim N(0,\sigma_a)$. Suppose $\mathbb{P}(C=1|X,Y_1,Y_2)=\sigma(\beta_sS+\beta_a A+\alpha+\epsilon_c)$ with constants $\beta_s,\beta_a,\alpha\in\mathbb{R}$ and random error $\epsilon_c\perp (S,A)$. If $\beta_a = 0$, then $\mathcal{H}_0$ is true. If $\beta_a = 1$, then $\mathcal{H}_1$ is true. In extreme case that $Corr(S,A)=1$, then $A=\beta_{as}S+\alpha_a$ for some constants $\alpha_a\in\mathbb{R}$ and $\beta_{as}\in\mathbb{R}^+$. Then the model cannot tell if $\beta_a=0$ or not. This is because when $\beta_a=0$, we can still reparametrize it as $\mathbb{P}(C=1|X,Y_1,Y_2)=\sigma((\beta_s-\beta_{as})S+ A+(\alpha-\alpha_a)+\epsilon_c)$.
\end{proof}
The desired behavior of a reward model is to determine the preference label $C$ ignoring the artifact $A$, which corresponds to $\mathcal{H}_0$. To achieve that, we can utilize two d-separation relationships of the DAG $\mathcal{G}$. 
\begin{itemize}
    \item \textbf{R1}: Under $\mathcal{H}_0$, $A$ and $C$ are d-separated by $(Y_1,Y_2)$, thus $A\perp C\mid (Y_1,Y_2)$.
    \item \textbf{R2}: Under $\mathcal{H}_0$, $A$ and $C$ are d-separated by $S$, thus $A\perp C\mid S$.
\end{itemize}

\subsection{Data augmentation}
To fix the issue mentioned in  Proposition~\ref{prop:identifiability}, we can effectively utilize \textbf{R1}\&\textbf{R2}. In particular, we propose to augment data with by adding the permuted pairs of generated responses. 
\paragraph{Possible Combinations} Given the dataset of triplets $\mathcal{D}_\text{hf}=\{t^{(i)}\}_{i=1}^N$ with $t^{(i)}=(x^{(i)}, y_w^{(i)}, y_l^{(i)})$, we can first expand the dataset as $\tilde{\mathcal{D}}_\text{hf}=\{t^{(i)},t^{(\sigma_1(i))},t^{(\sigma_2(i))}\}_{i=1}^N$, where $\sigma_1:[N]\rightarrow [N]$ and $\sigma_2:[N]\rightarrow [N]$ are two different invertible permutation functions randomly sampled from permutation group $S_N$. In practice, we can shuffle the dataset twice to achieve $\sigma_1$ and $\sigma_2$. There are in total $3\times {6 \choose 2} = 45$ possible $(x,y_1,y_2)$ unordered triplets from each element in $\tilde{\mathcal{D}}_\text{hf}$. This is because there are 3 possible prompts with 2 choices among 6 responses and we treat $(x,y_1,y_2)$ and $(x,y_2,y_1)$ as the same one. 

\paragraph{Preference Labels} For the unordered triplet, we can set the preference rule based on the DAG $\mathcal{G}$. We say response $y$ is \emph{contextual} to $x$ if they are from the same triplet in $\mathcal{D}_\text{hf}=\{x^{(i)}, y_w^{(i)}, y_l^{(i)}\}_{i=1}^N$. For example, $y_w^{(i)}$ and $y_l^{(i)}$ are contextual to $x^{(i)}$, but $y_w^{(j)}$ and $y_l^{(j)}$ are not contextual to $x^{(i)}$ for $j\neq i$. Then for $(x, y_1, y_2)$, we have the following rules:
\begin{itemize}
    \item if both $y_1$ and $y_2$ are contextual to $x$, we set the winning one in $\mathcal{D}_\text{hf}$ as winner.
    \item if only one of $y_1$ and $y_2$ is contextual to $x$, we set the contextual one as winner.
    \item if neither $y_1$ nor $y_2$ is contextual to $x$, we set the preference label as ``Tie''.
\end{itemize}
Here we assume that $y_l^{(i)}$ is still an acceptable response for $x^{(i)}$ because it is usually generated by a language model conditional on $x^{(i)}$.

\paragraph{Augmented Triplets} From \textbf{R1}, we can fix $(Y_1,Y_2)$ and vary $X$ to perturb $C$. From \textbf{R2}, we can fix $C$ by picking a contextual (prompt, response) pair $(X,Y_1)$ and another non-contextual response $Y_2$. Then we set $Y_1$ as winning response and vary losing response $Y_2$ to perturb $A$. We can see the augmented datasets derived from the above two rules cover all possible $(x,y_1,y_2)$ unordered triplets generated from $\tilde{\mathcal{D}}_\text{hf}$. For simplicity, we select the ones with prompt $x^{(i)}$, which provides us with the following additional augmented triplets\footnote{We show a sample Python code in Appendix~\ref{app:python}.}:

\begin{align}\label{eq:augmented_triplets}
\begin{split}
    \left.
    \begin{aligned}
        (x^{(i)},y_w^{(i)},y_w^{(\sigma_1(i))})\rightarrow & \text{chosen}=y_w^{(i)} \\
        (x^{(i)},y_w^{(i)},y_w^{(\sigma_2(i))})\rightarrow & \text{chosen}=y_w^{(i)} \\
        (x^{(i)},y_w^{(i)},y_l^{(\sigma_1(i))})\rightarrow & \text{chosen}=y_w^{(i)} \\
        (x^{(i)},y_w^{(i)},y_l^{(\sigma_2(i))})\rightarrow & \text{chosen}=y_w^{(i)} \\
        (x^{(i)},y_l^{(i)},y_w^{(\sigma_1(i))})\rightarrow & \text{chosen}=y_l^{(i)} \\
        (x^{(i)},y_l^{(i)},y_w^{(\sigma_2(i))})\rightarrow & \text{chosen}=y_l^{(i)} \\
        (x^{(i)},y_l^{(i)},y_l^{(\sigma_1(i))})\rightarrow & \text{chosen}=y_l^{(i)} \\
        (x^{(i)},y_l^{(i)},y_l^{(\sigma_2(i))})\rightarrow & \text{chosen}=y_l^{(i)}
    \end{aligned}
    \right\} \text{Non-contextuals} 
    \left.
    \begin{aligned}
        (x^{(i)},y^{(\sigma_1(i))}_w,y^{(\sigma_1(i))}_l)\rightarrow \text{Tie} \\
        (x^{(i)},y^{(\sigma_2(i))}_w,y^{(\sigma_2(i))}_l)\rightarrow \text{Tie} \\
        (x^{(i)},y^{(\sigma_1(i))}_w,y^{(\sigma_2(i))}_w)\rightarrow \text{Tie} \\
        (x^{(i)},y^{(\sigma_1(i))}_w,y^{(\sigma_2(i))}_l)\rightarrow \text{Tie} \\
        (x^{(i)},y^{(\sigma_2(i))}_w,y^{(\sigma_1(i))}_l)\rightarrow \text{Tie} \\
        (x^{(i)},y^{(\sigma_1(i))}_l,y^{(\sigma_2(i))}_l)\rightarrow \text{Tie}
    \end{aligned}
    \right\} \text{Neutrals} 
\end{split}
\end{align}

``Non-contextuals'' set the contextual response as chosen and non-contextual one as rejected. ``Neutrals'' set both non-contextual responses as tie. With these, we have the following claim:
\begin{prop}\label{prop:no_artifact}
If the reward model is trained with $\mathcal{D}_\text{hf}$ and augmented triplets in Equation~\ref{eq:augmented_triplets}, there is no causal edge from $A$ to $C$ in DAG $\mathcal{G}$.
\end{prop}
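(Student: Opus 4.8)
The plan is to translate the causal assertion ``no edge $A\to C$'' into a statement about conditional independence in the distribution induced by the augmented data, and then read the conclusion off the graph via the faithfulness assumption. Concretely, faithfulness supplies the biconditional between conditional independence and d-separation, so it suffices to show that the augmented distribution satisfies $A\perp C\mid S$ (rule \textbf{R2}) or, symmetrically, $A\perp C\mid(Y_1,Y_2)$ (rule \textbf{R1}). Once either independence is established in the data distribution, faithfulness forces $A$ and $C$ to be d-separated by the corresponding conditioning set; inspecting the two candidate DAGs, the direct edge $A\to C$ of $\mathcal{H}_1$ is a length-one path that no conditioning set disjoint from its endpoints can block, so only $\mathcal{H}_0$ is compatible with the observed independence. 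This yields the absence of the edge, which is exactly the claim.

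The substance of the argument is therefore to verify that the augmentation genuinely enforces, say, $A\perp C\mid S$ in the population sense. I would argue this directly from the labeling rule behind Equation~\ref{eq:augmented_triplets}: the label $C$ is assigned purely according to contextuality — the contextual response wins, two non-contextual responses tie — and contextuality is, by assumption, precisely the information summarized by the sufficient statistic $S$. Hence on the augmented support the conditional law $\mathbb{P}(C\mid S,A)$ collapses to $\mathbb{P}(C\mid S)$, which is the defining property of $A\perp C\mid S$. The role of the two independent permutations $\sigma_1,\sigma_2$ is to break the $A$--$S$ correlation responsible for the non-identifiability in Proposition~\ref{prop:identifiability}: pairing a fixed contextual winner with non-contextual losers drawn from unrelated examples renders the loser's artifact independent of the winner's, so that across the augmented set the artifact is balanced between chosen and rejected responses at each level of the contextual signal. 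The \textbf{R1} direction, realized across examples by pairing a common response pair with different prompts, perturbs $C$ while holding $A$ constant, witnessing directly that $C$ tracks $S$ rather than $A$.

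I expect the main obstacle to be making this ``balancing'' step rigorous rather than heuristic. The difficulty is that $S$ is itself a function of $(X,Y_1,Y_2)$, so replacing a response to perturb $A$ also moves $S$; one must argue that the component of $S$ the labeling rule keys on — which response is contextual — is exactly the component held fixed by the augmentation, while the residual variation that remains is the artifact $A$ and is decorrelated from $C$ by the random permutations. Formalizing this requires either an explicit factorization of the contextual information into a contextuality indicator plus a residual, or a direct appeal to the sufficiency assumption guaranteeing that the contextuality-based labels depend on $(X,Y_1,Y_2)$ only through $S$. I would also flag the boundary assumption stated in the text — that each $y_l^{(i)}$ remains an acceptable, on-topic response for $x^{(i)}$ — as necessary so that the ``contextual beats non-contextual'' rule is well posed and does not reintroduce an artifact-based tiebreak into $C$. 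Once the balancing is secured, the faithfulness step is immediate and the conclusion follows.
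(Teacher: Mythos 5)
Your argument is essentially the paper's own proof: the paper establishes the claim by contradiction, noting that an edge $A\to C$ would (by faithfulness) violate $A\perp C\mid (Y_1,Y_2)$ and $A\perp C\mid S$, which the ``Non-contextuals'' and ``Neutrals'' triplets enforce --- you present the same faithfulness-plus-\textbf{R1}/\textbf{R2} argument in contrapositive form. Your additional discussion of why the labeling rule forces $\mathbb{P}(C\mid S,A)=\mathbb{P}(C\mid S)$, and your flagging of the unresolved rigor in the ``balancing'' step, actually goes beyond the paper's two-sentence proof, which simply asserts that the independencies hold on the augmented triplets.
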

\begin{proof}
We can prove this by contradiction. If there is a causal edge from $A$ to $C$, then the conditional independence relations $A\perp C\mid (Y_1,Y_2)$ and $A\perp C\mid S$ do not hold, which contracts to the triplets constructed on ``Non-contextuals'' and ``Neutrals''.
\end{proof}

\subsection{Connection to existing works}
\paragraph{ODIN~\citep{chen2024odin}}
ODIN decomposes reward into additive format of a quality one and a length one. During learning, it enforces the disentanglement between the quality reward and the response length and encourages the correlation between the length reward and the response length. We claim that this is a special case of our causal modelling with single observed artifact $A$ as length, because the disentangle learning is a necessary condition of the conditional independence between $C$ and $A$ given the data. Our framework is more general and can go beyond single and observed artifact.

\paragraph{Length-controlled AlpacaEval-2~\citep{dubois2024length}}
This work improves the original version of AlpacaEval-2 by conditioning on the length through Controlled Direct Effect~\citep{vanderweele2011controlled}. It adds length as a variable in the logistic regression to predict the preference. Effectively, it learns the residual part that cannot be explained by the length. In our framework, we directly learn the residual part that is orthogonal to the artifacts, which is the length in length-controlled AlpacaEval-2. Thus the two methods are equivalent, and our approach can go beyond single artifact and be extended to unobserved artifacts.

\paragraph{Length-controlled DPO~\citep{park2024disentangling}}
This work adds a length penalty in the RLHF objective (Equation~\ref{eq:rlhf_objective}). It serves as a post-hoc reward adjustment to mitigate the length bias during policy optimization. The idea for removing the lengthy bias using a length reward is the same as ODIN, but they don't have the correlation penalty and the additional hyperparameter introduced can add more complexity into policy optimization. In comparison, our work directly learns a artifact-free reward model so we do not need an explicit length adjustment factor in the alignment algorithm designs. 

\paragraph{Contrast Instructions \citep{shen2023trickle}}
This work shows the issues of reward models on the instruction and response consistencies when switching instruction or response to another similar one. It proposes a data augmentation training approach and retrieval-augmented inference technique to improve the consistencies of reward models. On contrary, by considering all possible combinations of $(x,y_1,y_2)$ across different examples, our approach uses the organic data from the dataset, which can effectively eliminate the artifacts existing in the dataset. 

\section{Experiments}
In this section, we conduct reward modeling and apply the trained reward to downstream alignment tasks to verify the effectiveness of the proposed method. For deeper understanding, we also conduct analysis on reward model training data,  aligned policies, and perturbed reward model training data.

\subsection{Settings}
\paragraph{Training Set} We study RRM using the preference dataset curated by RLHFlow\footnote{\url{https://huggingface.co/datasets/RLHFlow/pair_preference_model_dataset}}~\citep{dong2024rlhf}, which has been used to train a series of strong open-source preference models as evaluated by the Reward-Bench \citep{lambert2024rewardbench}. The dataset consists of 700K preference pairs, which is a mixture of HH-RLHF \citep{bai2022training}, SHP \citep{ethayarajh2022understanding}, HelpSteer \citep{wang2023helpsteer}, PKU-SafeRLHF \citep{ji2024beavertails}, UltraFeedback \citep{cui2023ultrafeedback}, UltraInteract \citep{yuan2024advancing}, Distilabel-Capybara \citep{Capybara}, and Distilabel-Orca \citep{OpenOrca}. We list the data sources and number of examples in Table~\ref{tab:data_source}. The authors of the original paper delete the samples with similar scores when the scores are available because when the model is well calibrated, these samples are more likely to mislabelled. Thus the total number is smaller than the sum of each individual datasets.

\begin{table}[h]\small
\begin{center}
\begin{tabular}{l|c}
\hline
\textbf{Source} & \textbf{Number of Examples}
\\\hline\hline
HH-RLHF-Helpful\tablefootnote{\url{https://huggingface.co/datasets/RLHFlow/HH-RLHF-Helpful-standard}} \citep{bai2022training} & 115,396 \\
SHP\tablefootnote{\url{https://huggingface.co/datasets/RLHFlow/SHP-standard}} \citep{ethayarajh2022understanding} & 93,301 \\
HelpSteer\tablefootnote{\url{https://huggingface.co/datasets/RLHFlow/Helpsteer-preference-standard}}\citep{wang2023helpsteer} & 37,131\\
PKU-SafeRLHF\tablefootnote{\url {https://huggingface.co/datasets/RLHFlow/PKU-SafeRLHF-30K-standard}} \citep{ji2024beavertails} & 26,874 \\
UltraFeedback\tablefootnote{\url {https://huggingface.co/datasets/RLHFlow/UltraFeedback-preference-standard}} \citep{cui2023ultrafeedback} & 340,025
 \\
UltraInteract\tablefootnote{\url {https://huggingface.co/datasets/RLHFlow/UltraInteract-filtered-standard}} \citep{yuan2024advancing} & 161,927 \\
Distilabel-Capybara\tablefootnote{\url{https://huggingface.co/datasets/RLHFlow/Capybara-distibalel-Filter-standard}} \citep{Capybara} & 14,811 \\
Distilabel-Orca\tablefootnote{\url{https://huggingface.co/datasets/RLHFlow/Orca-distibalel-standard}} \citep{OpenOrca} & 6,926 \\
\hline
\end{tabular}
\caption{Composition of reward model training dataset}
\label{tab:data_source}
\end{center}

\end{table}

\paragraph{Reward Model Training Details}
We first train a pairwise ranking reward model (RM) from Gemma-2-9b-it. With the augmentation illustrated in Equation~\ref{eq:augmented_triplets}, we can get 14X additional examples, most of which can be too easy for RM to learn. To reduce the augmented data size, we first conduct inference on random 50\% of the augmented data using the trained RM, and leave the examples with $|\hat{\mathbb{P}}(A\succ B)-\mathbb{P}^*(A\succ B)|\geq 0.2$, where $\hat{\mathbb{P}}(A\succ B)$ is winning probability calculated by RM and $\mathbb{P}^*(A\succ B)$ is the ground truth probability\footnote{The ground truth probability is 1 if A is preferred over B, 0 if B is preferred over A, and 0.5 if they tie.}. We get 2.4M training examples by merging the filtered augmented data and original RM training data. Then we use the same training recipe to get the robust reward model (RRM). We train the reward models for 1 epoch using AdamW~\citep{loshchilov2017decoupled} optimizer with learning rate 1e-6 and batch size 128\footnote{We found 1 epoch is best for reward model training and we pick the best hyperparameter by grid search.}.

\paragraph{Policy Model Training Details}
We train DPO policies using the on-policy responses generated by Gemma-2-9b-it and labeled by RM and RRM, respectively. We use the prompt set from the UltraFeedback dataset to generate 5 responses per prompt. Then, we compare all ${5 \choose 2}$ pairs and pick the best-worst response pairs to align the DPO policy following \citep{pace2024west, dong2024rlhf}. We train the policies for 2 epochs at most using AdamW~\citep{loshchilov2017decoupled} optimizer with learning rate 2e-7 and a global batch size of 128, where the batch size follows \citet{dong2024rlhf} and the learning rate is decided by grid search. 


\paragraph{Evaluation Metrics} We evaluate the quality of reward model from two perspectives: the accuracy on Reward-Bench~\citep{lambert2024rewardbench} and the quality of policies induced by the reward model. For policies induced by the reward model, we consider two variants: 1. Best-of-N (BoN) policy and 2. aligned DPO policy. Our main focus is for open-ended generation and we use MT-Bench~\citep{zheng2024judging} and AlpacaEval-2~\citep{dubois2024alpacafarm} to evaluate. 

\subsection{Main Results}
\paragraph{Reward Model Accuracy}
The test accuracies on Reward-Bench are reported in Table~\ref{tab:reward-bench}. RRM improves ``Chat Hard'' and ``Safety'' by a clear margin but sacrifices the ``Reasoning''. Regarding ``Reasoning'', we hypothesize that math and coding are less affected by the non-contextual artifacts and we may use other rewards than an LLM because those are objectives like golden answers. On average, RRM improves RM by an absolute 3.54\% accuracy gain. 

\begin{table}[h]\small
\begin{center}
\begin{tabular}{l|cccc|c}
\hline
\textbf{Model} & \textbf{Chat}  &  \textbf{Chat Hard} &  \textbf{Safety} & \textbf{Reasoning} & \textbf{Average}
\\\hline\hline
RM & \textbf{97.77} & 51.54 & 78.54 & \textbf{94.58} & 80.61 \\
\hline
RRM & 96.51 & \textbf{65.57} & \textbf{83.90} & 90.62 & \textbf{84.15} \\
\hline
\end{tabular}
\caption{Comparison of test accuracy of Reward-Bench. RRM shows improvement upon RM on Chat Hard and Safety with an average 3.54\% improvement of accuracy.}
\label{tab:reward-bench}
\end{center}

\end{table}

\paragraph{Policies Induced by Reward Models}
We investigate the quality of reward models by evaluating the aligned policies. To study the effect of adding ``Neutrals'' in Equation~\ref{eq:augmented_triplets}, we also train a reward model without augmented neutrals (-Neutrals). The results are summarized in Table~\ref{tab:main_results}. As expected, ODIN~\citep{chen2024odin}\footnote{Training details in Appendix~\ref{app:odin}.} shows shorter responses than RM and RRM since it explicitly disentangles the length from quality. RRM shows the best performance on MT-Bench first turn and AlpacaEval-2 over ODIN and RM, with shorter responses generated than RM, suggesting it effectively controls the length as one of the artifact. The added ``Neutrals'' have slight improvements on first-turn MT-Bench and AlpacaEval-2.

\begin{table}[h]\small
\begin{center}
\begin{tabular}{ll|cccccc}
\hline
Reward & Policy & \multicolumn{3}{c}{ MT-Bench\tablefootnote{we do not evaluate BoN policies on MT-Bench because it involves multi-turn.} } &  \multicolumn{3}{c}{ AlpacaEval-2 } \\
  & & T1 ($\uparrow$) & T2 ($\uparrow$) & Overall ($\uparrow$) & LC (\%) ($\uparrow$) & WR (\%) ($\uparrow$) & Length ($\downarrow$)
\\ 
\hline\hline
 RM  & BoN (N=8) &  - & - & - & 36.87 & 50.14 & 3072 \\
 RRM & BoN (N=8) & - & - & - & \textbf{47.68} & \textbf{53.19} & \textbf{1770} \\\hline
 RM  & BoN (N=64) & - & - & - & 40.52 & 57.62 & 2992 \\
 RRM & BoN (N=64)  & - & - & - & \textbf{62.82} & \textbf{63.03} & \textbf{1770} \\\hline
 RM  & DPO       & 8.02 & 6.33 & 7.27 & 33.46 & 41.07 & 2416 \\
 ODIN & DPO      & 8.66 & 8.13 & 8.39 & 48.29 & 37.13 & \textbf{1559} \\
 RRM & DPO       & \textbf{8.70} & 7.87 & 8.31 & \textbf{52.49} & \textbf{43.31} & 1723 \\
 ~~-Neutrals & DPO & 8.65 & \textbf{8.21} & \textbf{8.44} & 51.73 & 43.24 & 1722 \\
\hline
\end{tabular}
\caption{Comparison among different reward models on various aligned policies. T1 and T2 stand for the first and second turn of the conversation, respectively. WR stands for win-rate against GPT-4. LC stands for length-controlled win-rate. Length is the average number of characters in the generated responses. RRM shows quality improvements over ODIN and RM with shorter responses than RM. Dropping augmented neutral examples slightly hurt the quality.}
\label{tab:main_results}
\end{center}
\end{table}

\subsection{Length Analysis}
To further understand the artifacts learned in reward model, we take length as an example to analyze the reward model training data and aligned policy.
\paragraph{Length distribution of training data}
We study the length (number of tokens) distribution of reward model training datasets. Length is one common artifact that shows bias on both policy training and evaluation. We hypothesize that the bias can possibly come from the reward model training data. The one used in training RM is not well calibrated and the chosen responses are longer on average (Figure~\ref{fig:rm_data_length_a}) and by frequency (Figure~\ref{fig:rm_data_length_c}). On contrary, the RRM training data is better calibrated with length more balanced between chosen and rejected responses in each length bin (Figure~\ref{fig:rm_data_length_b} and~\ref{fig:rm_data_length_c}). We further provide length analysis for each data source in Appendix~\ref{app:rmds}.
\begin{figure*}[t!]
    \centering
    \begin{subfigure}[t]{0.32\textwidth}
        \centering
        \includegraphics[height=1.3in]{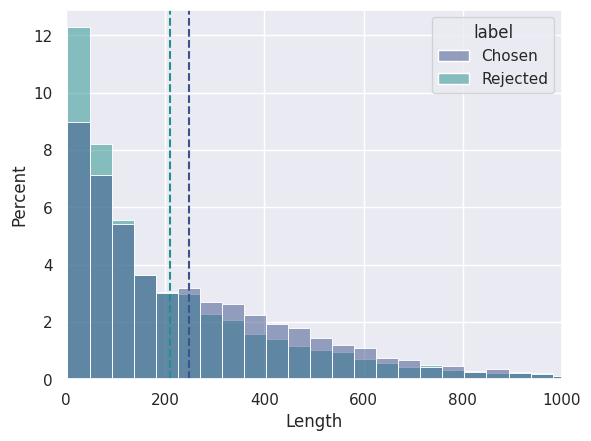}
        \caption{Histogram of response lengths in RM training data.}
        \label{fig:rm_data_length_a}
    \end{subfigure}%
    ~ 
    \begin{subfigure}[t]{0.32\textwidth}
        \centering
        \includegraphics[height=1.3in]{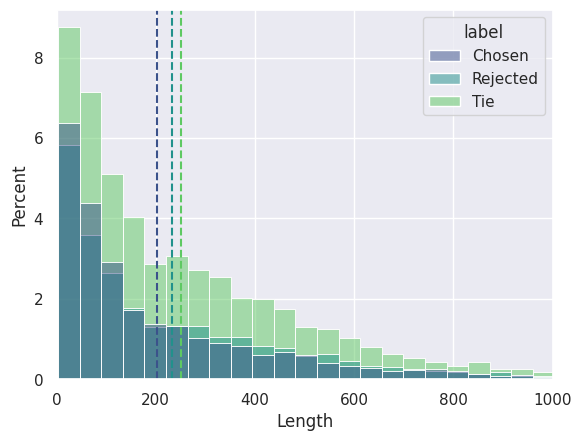}
        \caption{Histogram of response lengths in RRM training data.}
        \label{fig:rm_data_length_b}
    \end{subfigure}
    ~ 
    \begin{subfigure}[t]{0.32\textwidth}
        \centering
        \includegraphics[height=1.3in]{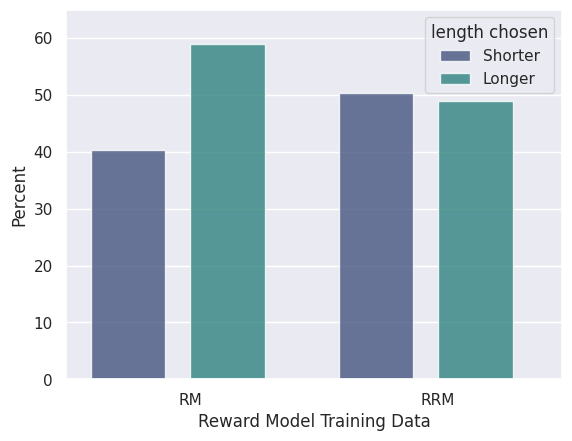}
        \caption{Percentage of chosen responses being longer or shorter in RM and RRM traininng data.}
        \label{fig:rm_data_length_c}
    \end{subfigure}
    \caption{Distribution of response lengths on reward model training datasets. (a) the RM training data has longer chosen responses on average and not well calibrated (large percent deviation in left two bins between chosen and rejected) (b) the RRM training data is well calibrated and the average length of the chosen responses is even shorter than rejected. Additional neutral triplets can further calibrated the model. (c) Around 60\% of chosen responses are longer in RM training data. On contrary, the lengths of chosen responses are more balanced in RRM training data.}
    \label{fig:rm_data_length}
\end{figure*}

\paragraph{Length distribution of policies}
To understand the lengthy bias learned in various policies, we also study the length distribution of generated responses on AlpacaEval-2's~\citep{dubois2024length} prompts (Figure~\ref{fig:policy_length}). We observe that the policies induced by RRM generate shorter responses than RM, which implies the correction of lengthy bias by RRM.

\begin{figure*}[t!]
    \centering
    \begin{subfigure}[t]{0.32\textwidth}
        \centering
        \includegraphics[height=1.3in]{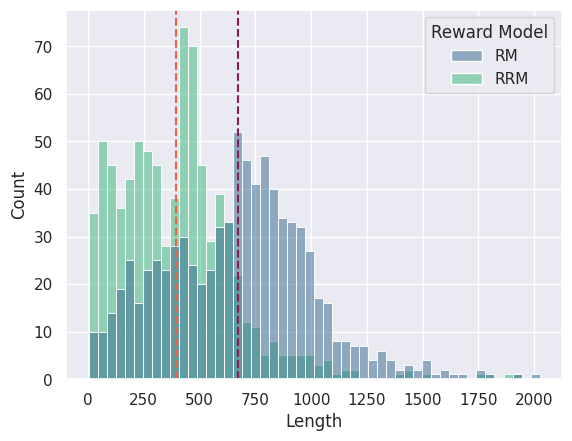}
        \caption{Best of 8 responses}
    \end{subfigure}%
    ~ 
    \begin{subfigure}[t]{0.32\textwidth}
        \centering
        \includegraphics[height=1.3in]{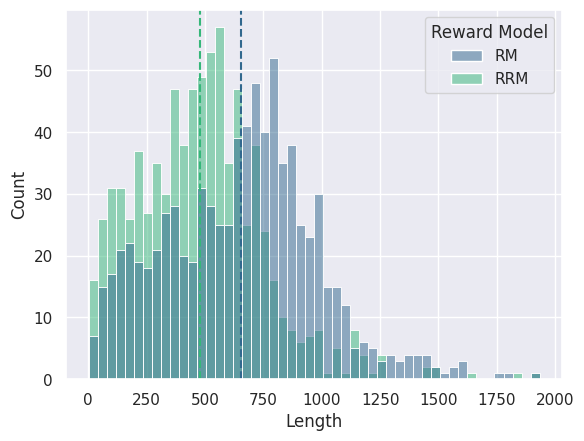}
        \caption{Best of 64 responses}
    \end{subfigure}
    ~ 
    \begin{subfigure}[t]{0.32\textwidth}
        \centering
        \includegraphics[height=1.3in]{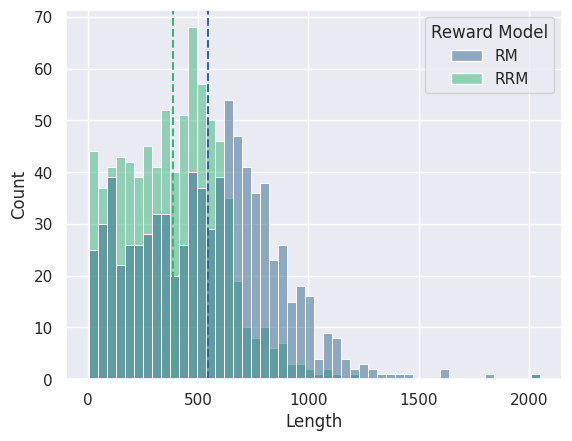}
        \caption{DPO policy}
    \end{subfigure}
    \caption{Distribution of response lengths on AlpacaEval-2 prompts of various policies induced by RM and RRM, average length is marked by the dashed line. All policies show a lengthy bias towards longer responses for RM comparing with RRM.}
    \label{fig:policy_length}
\end{figure*}

\subsection{Deliberately designed artifacts}
\paragraph{Artifacts}
To verify that our proposed method is able to eliminate  artifacts, we artificially added an artifact to the chosen responses in reward model training data. More specifically, we add prefix ``\textit{Sure, here is the response: }'' to the chosen responses with probability 0.1. We train an RM and RRM on the modified reward model training data, respectively.

To test the effect of reward model on the policy model, we first sample $N$ responses from Gemma-2-9b-it model using the AlpacaEval-2 prompts. Then we add the same type of artifact to each response with probability $p_a=p$, where $p\in\{0.05, 0.1, 0.2, 0.5\}$. Under this setting, RM trained on the artifact-added data would prefer responses with the artifacts since the chosen responses come with artifacts, even if the responses may contain low-quality answer. RRM is expected to be more robust to the artifact. To verify this, we construct BoN policies using RM and RRM, respectively. 

As expected, Figure~\ref{fig:artifact} shows that after adding the artifacts, the BoN policies induced by RRM are more robust than RM to artifacts injected in the responses.

\begin{figure}[h]
    \centering
    \begin{subfigure}[h]{0.4\textwidth}
        \centering
        \includegraphics[height=1.6in]{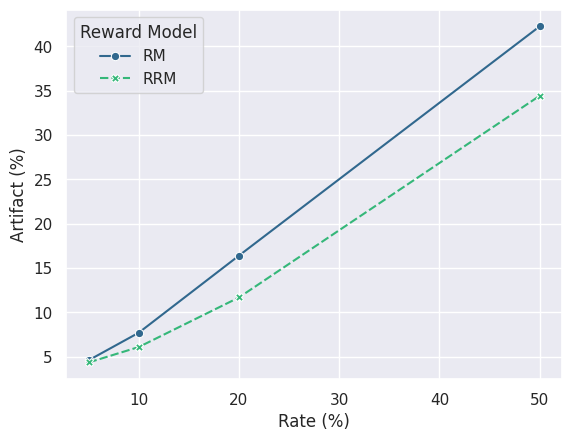}
        \caption{Best of 8 responses}
    \end{subfigure}
    ~ 
    \begin{subfigure}[h]{0.4\textwidth}
        \centering
        \includegraphics[height=1.6in]{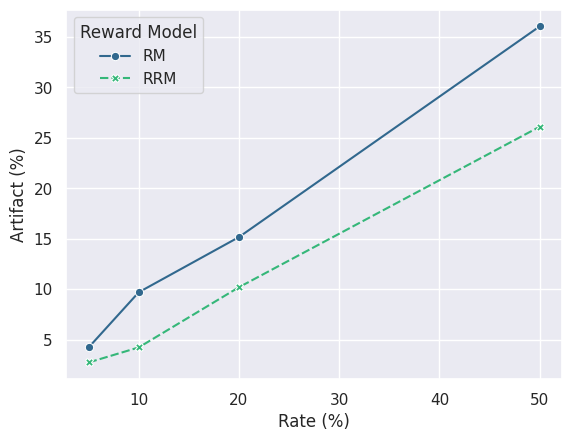}
        \caption{Best of 64 responses}
    \end{subfigure}
    \caption{Proportion of BoN generated responses with artifact versus the rate of injected artifact. For each policy, we first sample $N$ ($N=8$ or $64$) responses on AlpacaEval-2 prompts, then prepend ``Sure, here is the response: '' before each response with probability (Rate) 5\%, 10\%, 20\%, 50\%, respectively. Then we compute the proportion of BoN responses that have the above artifact (Artifact). The BoN policies induced by RRM are more robust to artifacts injected in the responses, suggesting that the proposed approach enables the model to focus more on the contextual signals instead of context-free artifacts in the reward model training data.}
    \label{fig:artifact}
\end{figure}

\section{Related Works}

\paragraph{RLHF algorithms} The first RLHF framework~\citep{stiennon2020learning} is based on the proximal policy optimization (PPO) algorithm, which was first popularized by \citet{christiano2017deep} and further developed by \citet{bai2022training, ouyang2022training}. However, getting PPO work is challenging especially in the era of LLMs \citep{choshenweaknesses, engstrom2020implementation}. In recognition of this issue, another line of works propose direct alignment algorithms, where notable examples include SLiC \citep{zhao2023slic}, DPO \citep{rafailov2024direct}, IPO~\citep{azar2024general}, KTO~\citep{ethayarajh2024kto}, ORPO~\citep{hong2024reference}, SimPO~\citep{meng2024simpo}, and DRO~\citep{richemond2024offline}. These algorithms directly optimize a supervised target to optimize the policy model without constructing a reward model first, hence the name direct alignment algorithms. However, these algorithms learning from a fixed dataset are offline and often off-policy without further exploration of the environment. RSO~\citep{liu2023statistical} emphasizes the importance of reward model and fixes the distribution shift problem to improve the DPO training, followed by list-wise alignment~\citep{liu2024lipo} and the online (iterative) training frameworks~\citep{xiong2024iterative, guo2024direct,calandriellohuman}. Alternatively, there is also a line of work based on the best-of-n sampling, such as RAFT \citep{dongraft}, BOND \citep{sessa2024bond}, BoNBoN alignment \citep{gui2024bonbon}. These algorithms leverage a reward model to rank the generated responses and distill knowledge from the best responses. Our approach can benefit RLHF algorithms relying on a reward model.

\paragraph{Reward Models \& Reward Hackings}
Building a superhuman/unbiased reward model is vital for training better chat assistants since it could affect the upper bound of the policies' capabilities in the online preference optimization~\citep{wang2024secretsrlhflargelanguage, bai2022constitutionalaiharmlessnessai}. 
Multi-objective rewards~\citep{wang2024interpretablepreferencesmultiobjectivereward}, RLHF-workflow~\citep{dong2024rlhf}, and RMBoost~\citep{shen2024boosting} are proposed to train more capable reward models. 
While revealed by \citet{anthropic2024hacking, zhang2024listsemojisformatbias}, reward models are easily hacked by different pattern in different scenario, e.g., length~\citep{singhal2023long} and sycophancy. Recent works employ the model merging (WARP~\citep{rame2024warp} and WARM~\citep{rame2024warm}), and hacking reward decomposition (ODIN~\citep{chen2024odin}) to mitigate the hackings in online RLHF. Generative reward models can provide more detailed preference analysis~\citep{yan-etal-2024-predicting}. For the most accurate reward signal, one can also use verifiable answers in certain domain like math~\citep{xiong2024building}.
Most model-based methods failed to distinguish between preferences driven by the prompt and context-free artifacts.
Our RRM is more advanced in removing the artifacts.


\section{Conclusion}
In this paper, we identified a key problem in the current reward training methodology: its inability to differentiate between contextual signals and context-free artifacts. Using a causal framework, we explained this effect and improved reward model training by introducing a data augmentation approach derived from the framework. Our theoretical analysis and extensive empirical results demonstrated that the proposed techniques effectively enhance both the test accuracy of the reward model and the quality of the policies it induces. Future work will explore filtering augmented pairs and matching artifacts when constructing response pairs, further refining the training process.

\bibliography{rrm}
\bibliographystyle{iclr2025_conference}

\appendix
\section{Appendix}

\subsection{Additional length analysis of reward model training datasets}\label{app:rmds}
In this section, we show the length distribution of chosen and rejected responses from each individual data source in Figure~\ref{fig:source_lengths}. 
HH-RLHF, SHP, HelpSteer, UltraFeedback show bias towards rejected responses as the first length bin. SHP, HelpSteer, and UltraFeedback have longer chosen responses than rejected ones on average.

\begin{figure*}[t!]
    \centering
    \begin{subfigure}[t]{0.4\textwidth}
        \centering
        \includegraphics[height=1.8in]{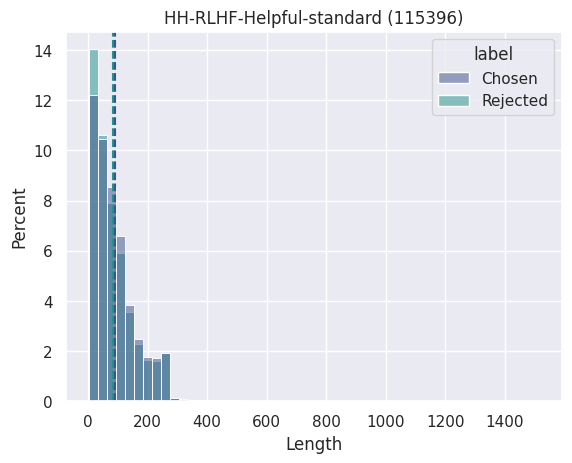}
        \caption{HH-RLHF-Helpful}
    \end{subfigure}%
    ~ 
    \begin{subfigure}[t]{0.4\textwidth}
        \centering
        \includegraphics[height=1.8in]{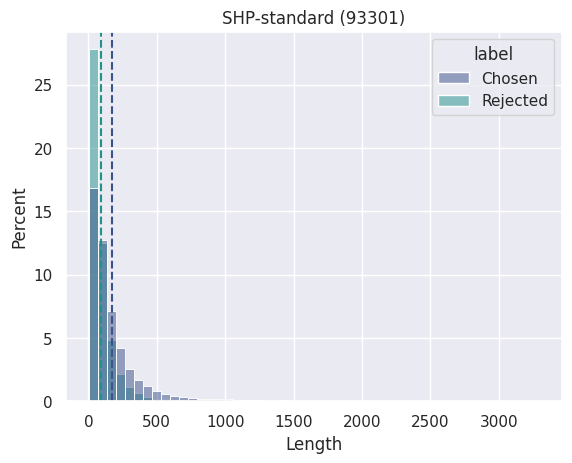}
        \caption{SHP}
    \end{subfigure}
    \\
    \begin{subfigure}[t]{0.4\textwidth}
        \centering
        \includegraphics[height=1.8in]{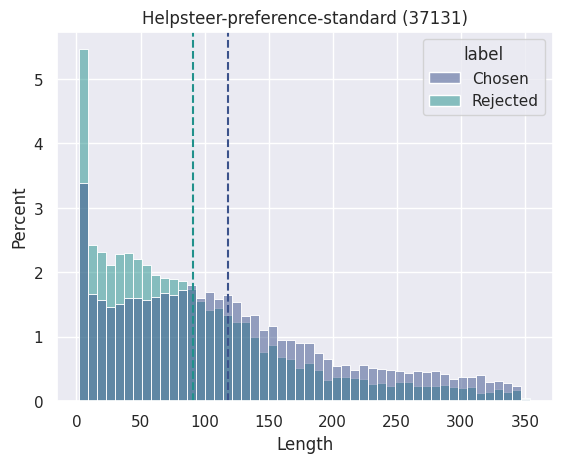}
        \caption{HelpSteer}
    \end{subfigure}
    ~
    \begin{subfigure}[t]{0.4\textwidth}
        \centering
        \includegraphics[height=1.8in]{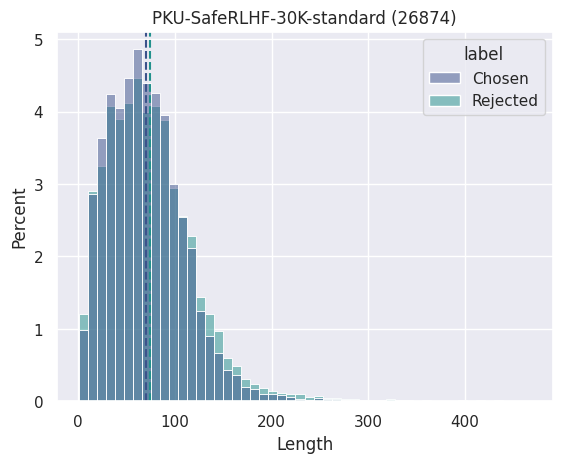}
        \caption{PKU-SafeRLHF}
    \end{subfigure}
    \\
    \begin{subfigure}[t]{0.4\textwidth}
        \centering
        \includegraphics[height=1.8in]{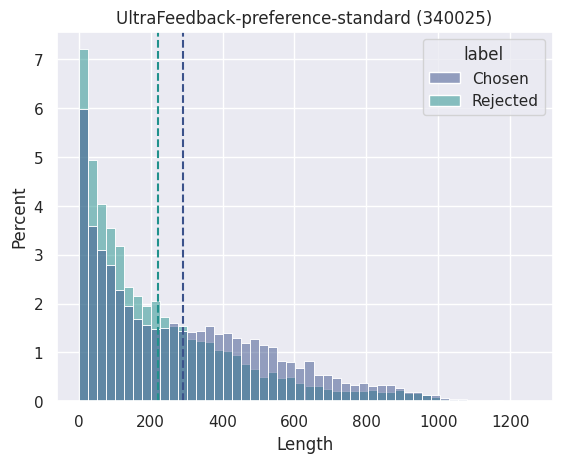}
        \caption{UltraFeedback}
    \end{subfigure}%
    ~ 
    \begin{subfigure}[t]{0.4\textwidth}
        \centering
        \includegraphics[height=1.8in]{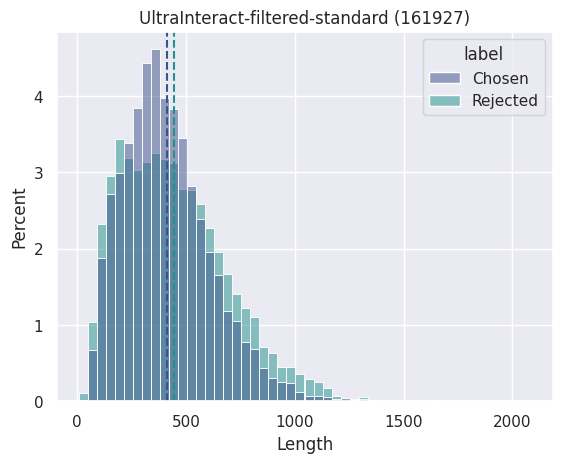}
        \caption{UltraInteract}
    \end{subfigure}
    \\
    \begin{subfigure}[t]{0.4\textwidth}
        \centering
        \includegraphics[height=1.8in]{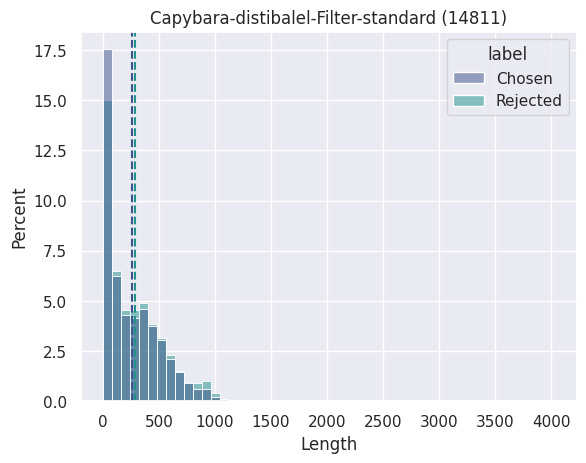}
        \caption{Distilabel-Capybara}
    \end{subfigure}
    ~
    \begin{subfigure}[t]{0.4\textwidth}
        \centering
        \includegraphics[height=1.8in]{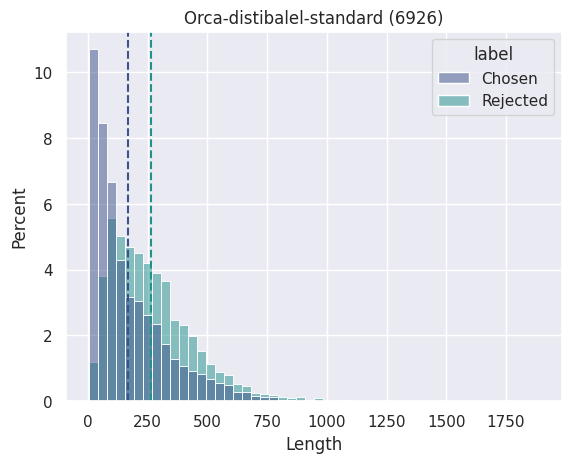}
        \caption{Distilabel-Orca}
    \end{subfigure}
    \caption{Distribution of response lengths on each individual source of reward model training data. SHP, HelpSteer, and UltraFeedback show significant lengthy bias showing longer responses in chosen. They also dominate the training dataset, accounting for more than a half.}
    \label{fig:source_lengths}
\end{figure*}

\subsection{Data augmentation python code}\label{app:python}
In Algorithm~\ref{alg:rs_algo}, we show a sample code of data augmentation in Python. We expect each element in data contains ``context'', ``response\_w'', ``response\_l''. We use ``neutral'' to indicate if the label should be ``Tie''.
\begin{algorithm}[h]\tiny
\caption{Example Python Code for Data Augmentation}
\label{alg:rs_algo}
\begin{lstlisting}[basicstyle=\footnotesize\ttfamily,language=Python]
def get_augmented(data: List[Dict[str, Any]]) -> List[Dict[str, Any]]:
  data_i = data
  data_j = data_i.copy()
  random.shuffle(data_j)
  data_k = data_j.copy()
  random.shuffle(data_k)
  for ex_i, ex_j, ex_k in zip(data_i, data_j, data_k):
    xi = ex_i['context']
    xj = ex_j['context']
    xk = ex_k['context']
    ywi = ex_i['response_w']
    ywj = ex_j['response_w']
    ywk = ex_k['response_w']
    yli = ex_i['response_l']
    ylj = ex_j['response_l']
    ylk = ex_k['response_l']
    # xi_ywi_ywj
    yield {
        "context": xi,
        "response_w": ywi,
        "response_l": ywj,
        "neutral": False
    }
    # xi_ywi_ywk
    yield {
        "context": xi,
        "response_w": ywi,
        "response_l": ywk,
        "neutral": False
    }
    # fill in all other augmented triplets ...
    # xi_ywk_ylj
    yield {
        "context": xi,
        "response_w": ywk,
        "response_l": ylj,
        "neutral": True
    }
    # xi_ylj_ylk
    yield {
        "context": xi,
        "response_w": ylj,
        "response_l": ylk,
        "neutral": True
    }
\end{lstlisting}
\end{algorithm}

\subsection{Training details for ODIN}\label{app:odin}
We use the same loss as described in \citet{chen2024odin}.
We train Gemma-2-9b-it for 1 epoch on the same data we used for RM.
AdamW is our optimizer and the learning rate is set to 2e-6 with cosine scheduler.
We use Flash-Attention to accelerate the training while applying the Deepspeed Zero-Stage 3 to get batch size 16 on each GPU (the global batch size is 128) to make sure the calculation of the Pearson correlation between the head value and the length of the responses is stable.

\subsection{Preliminaries in causal inference}\label{app:causal}
We list a few critical concepts in this section. For information, we refer the readers to~\citet{lauritzen1996graphical} and~\citet{pearl2009causality}.
\paragraph{DAGs and d-separation}
A DAG is a set of vertices and a set of directed edges (arrows) that connect pairs of these vertices. The causal modeling connects a DAG with Markov condition via a graphical relation called \emph{d-separation}~\citep{pearl2009causality}. D-separation is a relation among three disjoint sets of vertices in a directed graph. D-separation and Markov condition connect DAGs and probability distribution. By \emph{faithfulness} assumption, the d-separation in a DAG is equivalent to conditional independence in distribution.

\paragraph{The causal Markov condition}
The Causal Markov assumption assumes that a variable $X$ is independent of every other variable (except $X$'s effects) conditional on all of its direct causes. With this, a DAG defines a set of distributions of the form
$$
p(y_1,...,y_k)=\prod p(y_j|\text{parents}(y_j))
$$

\paragraph{Counterfactuals}
Consider two variables $X$ and $Y$. We will call $X$ the ``treatment''. We call $Y$ the ``outcome''. For a given subject we see $(X_i, Y_i)$. What we don't see is what their value of $Y_i$ would have been if we changed their value of $X_i$. This is called \emph{counterfactual}. Suppose that $X$ is a binary variable that represents some treatment. So $X=1$ means the subject was treated and $X=0$ means the subject was not treated. Let $Y_1$ denote the outcome if the subject is treated. Let $Y_0$ denote the response if the subject is not treated. Then 
$$Y=XY_1+(1-X)Y_0$$
If we treat a subject, we observe $Y_1$ but not $Y_0$. The unobserved variable is called a \emph{counterfactual}. The variables $(Y_0,Y_1)$ are also called \emph{potential outcomes}.
We define \emph{mean treatment effect} as
$$
\theta = \mathbb{E}(Y_1) - \mathbb{E}(Y_0) = \mathbb{E}(Y|\text{set }X=1) - \mathbb{E}(Y|\text{set }X=0)
$$

\subsection{Additional Results with Gemma-2-2b-it}
To further verify effectiveness of our approach, we train Gemma-2-2b-it reward model (RM) and robust reward model (RRM), respectively. Table~\ref{tab:reward-bench-2b} shows the results on the reward bench. RRM again shows improvement on the Chat Hard. It shows some regression on Safety and Reasoning, where we hypothesis that some context-free nature of reasoning and safety makes the RRM perform worse. Overall, RRM shows positive effect. We also test the reward models on AlpacaEval2 using best-of-n policies. The results are shown in Table~\ref{tab:bon_2b}. We use the trained reward model to rank the responses generated from the Gemma-2-9b-it model, and observe consistent gains on RRM over RM.

\begin{table}[h]\small
\begin{center}
\begin{tabular}{l|cccc|c}
\hline
\textbf{Model} & \textbf{Chat}  &  \textbf{Chat Hard} &  \textbf{Safety} & \textbf{Reasoning} & \textbf{Average}
\\\hline\hline
RM & 96.49 & 42.54 & 72.70 & \textbf{72.30} & 71.01 \\
\hline
RRM & \textbf{97.21} & \textbf{49.01} & \textbf{72.71} & 70.08 & \textbf{72.25} \\
\hline
\end{tabular}
\caption{Comparison of test accuracy of Reward-Bench. RRM shows improvement upon RM on Chat and Chat Hard with an average 1.75\% improvement of accuracy.}
\label{tab:reward-bench-2b}
\end{center}

\end{table}

\begin{table}[h]\small
\begin{center}
\begin{tabular}{ll|ccc}
\hline
Reward & Policy &  \multicolumn{3}{c}{ AlpacaEval-2 } \\
  & & LC (\%) ($\uparrow$) & WR (\%) ($\uparrow$) & Length ($\downarrow$)
\\ 
\hline\hline
 RM  & BoN (N=8) &  44.21 & 45.83 & 2264 \\
 RRM & BoN (N=8) &  \textbf{54.37} & \textbf{47.19} & \textbf{1749} \\\hline
 RM  & BoN (N=64) &  47.32 & 52.23 & 2316 \\
 RRM & BoN (N=64)  &  \textbf{60.42} & \textbf{56.50} & \textbf{1870} \\\hline
\end{tabular}
\caption{Comparison among different reward models on various aligned policies. WR stands for win-rate against GPT-4. LC stands for length-controlled win-rate. Length is the average number of characters in the generated responses. RRM shows quality improvements with shorter responses over RM.}
\label{tab:bon_2b}
\end{center}
\end{table}

\subsection{Additional Analysis with Mixed Artifacts}
To investigate the effect of RRM on mixed artifacts, we conduct an additional experiment as follows:
\begin{enumerate}
    \item With $p=0.1$, wrap the whole chosen response with ``**'' as bold-face.
    \item After the above step, with $p=0.1$, append emoji \emojione{}.
    \item Train RM and RRM on the above dataset.
\end{enumerate}

To test the effect of reward model on the policy model, we first sample $N$ responses from Gemma-2-9b-it model using the AlpacaEval-2 prompts. Then we add emoji \emojione{}  to each response with probability $p_a=p$, where $p\in\{0.05, 0.1, 0.2, 0.5\}$. Under this setting, RM trained on the artifact-added data would prefer responses with the artifacts since the chosen responses come with artifacts, even if the responses may contain low-quality answer. RRM is expected to be more robust to the artifact. To verify this, we construct BoN policies using RM and RRM, respectively. 

As expected, Figure~\ref{fig:mixed_artifact} shows that after adding the artifacts, the BoN policies induced by RRM are more robust than RM to artifacts injected in the responses.

\begin{figure}[h]
    \centering
    \begin{subfigure}[h]{0.4\textwidth}
        \centering
        \includegraphics[height=1.6in]{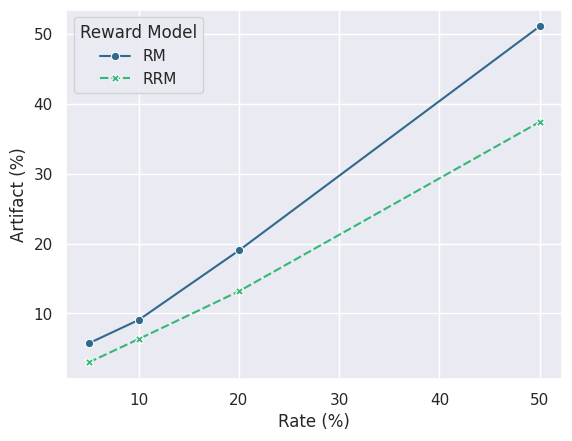}
        \caption{Best of 8 responses}
    \end{subfigure}
    ~ 
    \begin{subfigure}[h]{0.4\textwidth}
        \centering
        \includegraphics[height=1.6in]{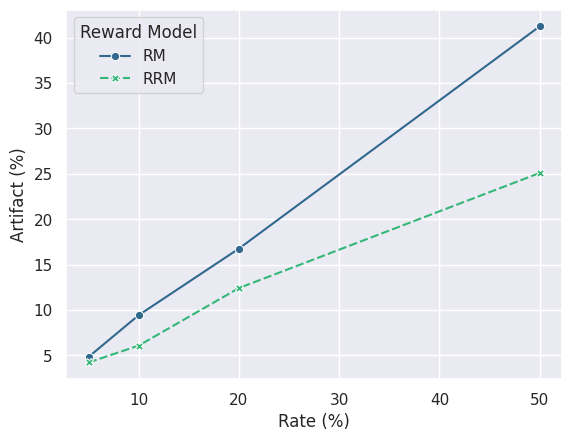}
        \caption{Best of 64 responses}
    \end{subfigure}
    \caption{Proportion of BoN generated responses with emoji versus the rate of injected emoji. For each policy, we first sample $N$ ($N=8$ or $64$) responses on AlpacaEval-2 prompts, then append emoji after each response with probability (Rate) 5\%, 10\%, 20\%, 50\%, respectively. Then we compute the proportion of BoN responses that have the above artifact (Artifact). The BoN policies induced by RRM are more robust to artifacts injected in the responses, suggesting that the proposed approach enables the model to focus more on the contextual signals instead of context-free artifacts in the reward model training data.}
    \label{fig:mixed_artifact}
\end{figure}

\subsection{Discussion on Data Filtering Strategies}
In this work, we assume the preference labels should be purely controlled by the prompt dependent signal. However, there can be cases such that prompt-independent signals can contribute to the preference label. For example, a responsible AI should be always safe and cares about the diversity. In the data augmentation, we sometimes use rejected response as new chosen in ``Non-contextuals''. See the last four triplets in ``Non-contextuals'' of Equation~\ref{eq:augmented_triplets}. For ``Neutrals'', we also assume 0.5 winning probability of a non-contextual response pair. These treatment may cause the unwanted behavior of AI if we use unsafe response as the new chosen or assigning winning probability of 0.5 on a pair of (safe, unsafe) responses. 

To address this, we have a few treatments that can be applied in future works:
\begin{itemize}
    \item Use a trained safety pointwise or Bradley-Terry model to filter out triplets that has low safety scores on the chosen responses.
    \item Use AI feedback such as Constitutional AI~\citep{bai2022constitutional} to ensure the augmented triplets have high quality chosen responses and consistent preference according to certain non-contextual rules. The rules can include safety, style, factuality.
\end{itemize}

\section{Ethics Statement}
Our research adheres to the ethical guidelines. Our work aims to mitigate reward hacking in RLHF, contributing to the development of more reliable AI systems that better align with human preferences. Our data augmentation process explicitly addresses and mitigates common forms of bias, thus reducing the potential for harm in practical applications of AI systems. Our model was designed with fairness in mind, particularly in avoiding biases related to response length and format, which can unfairly influence AI decision-making.

\section{Reproducibility Statement}
All code used for training the reward models (RM and RRM) and for running the experiments described in this paper will be made publicly available upon publication. This includes the implementation of the data augmentation pipeline, the reward model training process, and policy alignment. The datasets used in our experiments are publicly available. We provide the complete set of hyperparameters used for training the models, including learning rates, batch sizes, and other optimization settings. The evaluation approaches are also public available and can be fully reproduced.

\end{document}